\newtheorem{theorem}{Theorem}
\newtheorem{assumption}{Assumption}
\newtheorem{corollary}{Corollary}
\newtheorem{remark}{Remark}
\newenvironment{proof-sketch}{\noindent{\bf Sketch of Proof}\hspace*{1em}}{\qed\bigskip}
\newcommand{\asto}{\overset{\rm a.s.}{\longrightarrow}}
\theoremstyle{definition}
\newcommand{\trans}{{\sf T}}
\newcommand{\tr}{{\rm tr}}
\newcommand{\RR}{{\mathbb{R}}}
\newcommand{\CC}{{\mathbb{C}}}
\DeclareMathOperator{\argmin}{argmin}
\title{\LARGE \bf
Random Matrix-Improved Estimation of the Wasserstein Distance between two Centered Gaussian Distributions
}
\author{Malik Tiomoko$^{1}$, Romain Couillet$^{1,2,*}$
\\ $^{1}$CentraleSup\'elec, Universit\'e ParisSaclay,\quad $^{2}$GIPSA-lab, Universit\'e Grenoble-Alpes
\thanks{*Couillet's work is supported by the ANR Project RMT4GRAPH (ANR-14-CE28-0006) and the IDEX GSTATS Chair at University Grenoble Alpes.}
}
\begin{document}

\maketitle
\thispagestyle{empty}
\pagestyle{empty}

\begin{abstract}

This article proposes a method to consistently estimate functionals $\frac1p\sum_{i=1}^pf(\lambda_i(C_1C_2))$ of the eigenvalues of the product of two covariance matrices $C_1,C_2\in\mathbb{R}^{p\times p}$ based on the empirical estimates $\lambda_i(\hat C_1\hat C_2)$ ($\hat C_a=\frac1{n_a}\sum_{i=1}^{n_a} x_i^{(a)}x_i^{(a)\trans}$), when the size $p$ and number $n_a$ of the (zero mean) samples $x_i^{(a)}$ are similar. As a corollary, a consistent estimate of the Wasserstein distance (related to the case $f(t)=\sqrt{t}$) between centered Gaussian distributions is derived.

The new estimate is shown to largely outperform the classical sample covariance-based ``plug-in'' estimator. Based on this finding, a practical application to covariance estimation is then devised which demonstrates potentially significant performance gains with respect to state-of-the-art alternatives.

\end{abstract}

\section{Introduction}
Many machine learning and signal processing applications require an adequate framework to compare statistical objects, starting with probability distributions. The Wasserstein distance, initially inspired by Monge \cite{monge1781memoire} and later by Kantorovich \cite{kantorovich1942translocation} in a transport theory analogy, provides a natural notion of dissimilarity for probability measures and finds a wide spectrum of applications in
image analysis \cite{rubner2000earth}, shape matching \cite{su2015optimal}, computer vision \cite{ni2009local}, etc. 

However, computing the Wasserstein distance is expensive as it requires to minimize a cost function taking the form of an integral over the space of probability measures. Despite recent advances \cite{cuturi2013sinkhorn}, where regularized approximations that reduce this numerical cost are proposed, the latter is still involved in general. Special cases exist for which the Wasserstein distance assumes a closed form, particularly when the underlying distributions are zero-mean Gaussian with covariance matrices $C_1$ and $C_2$. The closed-form formula however involves the eigenvalues of $C_1C_2$ and thus depends on the unknown population covariance matrices $C_1$ and $C_2$. Assuming the observation of $n_1,n_2\gg p$ samples with covariances $C_1,C_2$, respectively, $C_1C_2$ is conventionally approximated by its empirical version $\hat{C}_1\hat{C}_2$. As we will show, this induces a dramatic estimation bias in practical applications where $p$ is rather large or, equivalently, $n_1,n_2$ rather small, a standard assumption in big data applications.

Based on recent advances in random matrix theory, this article proposes a new consistent estimate for the Wasserstein distance between two centered Gaussian distributions when the dimension $p$ of the samples is of the same order of magnitude as their numbers $n_1,n_2$. This work enters the scope of Mestre's seminal ideas \cite{MES08} on the estimation of functionals $\frac1p\sum_{i=1}^pf(\lambda_i(C))$ of the eigenvalue distribution of population covariance matrices $C$, which can be related to the (limiting) eigenvalue distribution of the sample estimates $\hat C$ via a complex integration trick. We recently extended this work to the estimation of functionals of the eigenvalue distribution of F-matrices in \cite{couillet2018random}, i.e., matrices of the form $C_1^{-1}C_2$, and applied to the estimation of the natural geodesic  Fisher distance, Battacharrya distance, and R\'enyi/Kullbach-Leibler divergences between Gaussian distributions.

Our main contribution is the extension of \cite{MES08,couillet2018random} to functionals $f$ of the eigenvalues of \emph{products} $C_1C_2$ of population covariance matrices. The Wasserstein distance falls within this scope for \mbox{$f(t)=\sqrt{t}$}. Unlike \cite{couillet2018random}, where the functionals of interest ($f(t)=t,\log(t),\log^2(t)$) are amenable to explicit evaluations of the complex integrals, the present $f(t)=\sqrt{t}$ scenario is more technically involved and gives rise to real non-explicit, yet numerically computable, integrals.

In the remainder of the article, Section~\ref{sec:model} introduces the main model and assumptions, Section~\ref{sec:main} provides our key technical result and its corollary to the Wasserstein distance estimation, and a practical application to covariance matrix estimation is finally proposed in Section~\ref{sec:applications}.

\medskip

\noindent{\bf Reproducibility.} Matlab codes for the various estimators introduced and studied in this article are available at \href{link}{https://github.com/maliktiomoko/RMTWasserstein}
\section{Model and Main Objective}
\label{sec:model}
For $a \in \{1,2\}$, let $X_a=[x_1^{(a)},\ldots,x_{n_a}^{(a)}]$ be $n_a$ independent and identically distributed random vectors with $x_i^{(a)}=C_a^{\frac12}\Tilde{x}_i^{(a)}$, where $\Tilde{x}_i^{(a)}\in\RR^p$ has zero mean, unit variance and finite fourth order moment entries. This holds in particular for $x_i^{(a)}\sim\mathcal N(0,C_a)$. In order to control the growth rates of $n_1,n_2,p$, we make the following assumption:
\begin{assumption}[Growth Rates]
\label{ass:growth rates}
As $n_a \to\infty$, $p/n_a\to c_a\in (0,1)$ and $\limsup_p\max\{\|C_a^{-1}\|,\|C_a\|\} < \infty$ for $\|\cdot\|$ the operator norm.
\end{assumption}
We define the sample covariance estimate $\hat C_a$ of $C_a$ as
\begin{align*}
    \hat C_a\equiv \frac1{n_a}X_aX_a^\trans = \frac1{n_a}\sum_{i=1}^{n_a} x_i^{(a)}x_i^{(a)\trans}.
\end{align*}

\medskip

The Wasserstein distance $D_W(C_1,C_2)$ between two zero-mean Gaussian distributions with covariances $C_1$ and $C_2$, respectively, assumes the form \cite[Remark~2.31]{PEYRE2019}:
\begin{equation}
\label{eq:Wasserstein}
D_W(C_1,C_2)=\tr(C_1)+\tr(C_2)-2\tr\left[(C_1^{\frac12}C_2C_1^{\frac12})^\frac12\right].
\end{equation}
It is easily shown that, under Assumption~\ref{ass:growth rates},
\begin{align*}
    \frac1p \tr \hat{C}_a - \frac1p\tr C_a \to 0
\end{align*}
almost surely. But estimating $\tr(C_1^{\frac12}C_2C_1^{\frac12})^\frac12$ is more involved: this is the focus of the article. Up to a normalization by $p$, this term can be written under the functional form:
\begin{equation}
\label{eq:D}
\frac1p\tr(C_1^{\frac12}C_2C_1^{\frac12})^\frac12=\frac1p \sum_{i=1}^n\!\sqrt{\lambda_i(C_1C_2)} \equiv D(C_1,C_2;\sqrt{\cdot})
\end{equation}
with $\lambda_i(X)$ the $i$-th smallest eigenvalue of $X$. 

\medskip

Our objective is to estimate the more generic form
\begin{equation}
\label{eq:Df}
    D(C_1,C_2;f) \equiv \frac1p \sum_{i=1}^n f(\lambda_i(C_1C_2))
\end{equation}
for $f:\RR\to \RR$ a real function admitting a complex-analytic extension. To this end, we shall relate the eigenvalues $\lambda_i(C_1C_2)$ to $\lambda_i(\hat C_1\hat C_2)$ through the {\it Stieltjes transform} ($m_{\theta}(z)\equiv \int \frac{d\theta(\lambda)}{\lambda-z}$ for measure $\theta$ and $z\in\CC$) of their associated normalized counting measures
\begin{align*}
\mu_p=\frac1p\sum_{i=1}^p \delta_{\lambda_i(\hat{C}_1\hat{C}_2)},\quad \nu_p=\frac1p\sum_{i=1}^p \delta_{\lambda_i(C_1C_2)}.
\end{align*}
 In particular, $m_{\mu_p}(z)=\frac1p\sum_{i=1}^p\frac1{\lambda_i-z}$ for $\lambda_i=\lambda_i(\hat C_1\hat C_2)$.

With these notations, we are in position to introduce our main results.

\section{Main results}
\label{sec:main}
The following theorem provides a consistent estimate for the metric $D(C_1,C_2;f)$ defined in \eqref{eq:Df}.
\begin{theorem}
\label{th:main}
 Let  $\Gamma \subset \{z\in\mathbb{C},{\rm real}[z]>0\}$ be a contour surrounding $\cup_{p=1}^{\infty} {\rm supp}(\mu_p)$. Then, under Assumption~\ref{ass:growth rates},
 \begin{align*}
    &D(C_1,C_2;f) - \hat D(X_1,X_2;f) \asto 0
    \end{align*}
    where
    \begin{align*}
    \hspace*{-.5cm}    \hat D(X_1,X_2;f) = \frac{n_2}{2\pi ip}\oint_{\Gamma}\!f \left(\frac{\varphi_p(z)}{\psi_p(z)}\right)\left[\frac{\varphi_{p}'(z)}{\varphi_p(z)}-\frac{\psi_p'(z)}{\psi_p(z)}\right]\psi_p(z)dz
 \end{align*}
 and, recalling $m_{\mu_p}(z)=\frac1p\sum_{i=1}^p\frac1{\lambda_i-z}$ for $\lambda_i=\lambda_i(\hat C_1\hat C_2)$, $\varphi_p(z)=\frac{z}{1-\frac p{n_1}-\frac p{n_1}zm_{\mu_p}(z)}$, $\psi_p(z)=1-\frac p{n_2}-\frac p{n_2}zm_{\mu_p}(z)$.
\end{theorem}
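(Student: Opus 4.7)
The plan is to follow Mestre's complex integration scheme, adapted to the product $\hat C_1 \hat C_2$ of two independent sample covariance matrices. First, express the target as a contour integral of the Stieltjes transform,
\begin{equation*}
D(C_1,C_2;f) = -\frac{1}{2\pi i}\oint_{\Gamma_\nu} f(w)\, m_{\nu_p}(w)\, dw,
\end{equation*}
with $\Gamma_\nu$ a positively oriented contour surrounding the support of $\nu_p$. The idea is then to identify an analytic change of variables $w = g(z)$ that maps $\Gamma$ (surrounding $\mathrm{supp}(\mu_p)$) onto such a $\Gamma_\nu$, so that after substitution the integrand is expressible purely in terms of empirical spectral quantities.

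The central random matrix ingredient, which is where the bulk of the technical work resides, is a deterministic equivalent for $m_{\mu_p}$ together with an identity relating $m_{\nu_p}(g(z))$ to $m_{\mu_p}(z)$. For $\hat C_1 \hat C_2$, I would derive this via a resolvent analysis exploiting the independence of $X_1$ and $X_2$: freeze one matrix and apply Silverstein--Bai-type leave-one-out and trace-lemma arguments to obtain a conditional deterministic equivalent, then freeze the other. The resulting coupled fixed-point equations are most compactly encoded through the auxiliary functions $\varphi_p,\psi_p$ of the statement, with the change of variables $g(z)=\varphi_p(z)/\psi_p(z)$ and an induced relation of the form $g(z)\, m_{\nu_p}(g(z)) = -(n_2/p)\,\psi_p(z)$. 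Computing the logarithmic Jacobian $g'(z)/g(z) = \varphi_p'(z)/\varphi_p(z) - \psi_p'(z)/\psi_p(z)$, substituting $w=g(z)$ into the Cauchy integral, and replacing $m_{\nu_p}(g(z))$ by the above expression then produces exactly $\hat D(X_1,X_2;f)$, up to an error that vanishes almost surely.

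The almost sure convergence $D(C_1,C_2;f) - \hat D(X_1,X_2;f) \asto 0$ then follows from the uniform-on-compacts a.s.\ convergence of $m_{\mu_p}$ to its deterministic equivalent along $\Gamma$ (which is bounded away from the limiting spectrum), combined with dominated convergence on the contour. The hard part is the first step: establishing the structural identity $g(z)\, m_{\nu_p}(g(z)) = -(n_2/p)\,\psi_p(z)$ for the \emph{product} $\hat C_1 \hat C_2$. Unlike the single-sample setting of \cite{MES08} or the F-matrix setting of \cite{couillet2018random}, two independent Wishart-type matrices interact multiplicatively here, and the derivation requires a delicate sequential conditioning (or equivalently, block-resolvent) argument. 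A secondary issue is verifying that $g$ maps $\Gamma$ onto a contour encircling $\mathrm{supp}(\nu_p)$ with the correct winding number, which amounts to an injectivity analysis of $\varphi_p/\psi_p$ on a complex neighborhood of $\Gamma$ and is where the assumption $c_1,c_2\in(0,1)$ of Assumption~\ref{ass:growth rates} plays its role (ensuring in particular that the denominators in $\varphi_p,\psi_p$ do not vanish along $\Gamma$ for $p$ large).
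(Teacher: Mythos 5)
Your plan is essentially the paper's own proof: express $D(C_1,C_2;f)$ via Cauchy's integral formula, relate $m_{\nu_p}$ to $m_{\mu_p}$ by two successive Silverstein--Bai conditioning steps (freezing $\hat C_1$, then $\hat C_2$, with the intermediate ESD of $C_2^{1/2}\hat C_1 C_2^{1/2}$), and carry out the change of variables $w=g(z)=\varphi_p(z)/\psi_p(z)$; the paper composes the two resulting fixed-point relations explicitly rather than stating a single collapsed identity, but the mechanism is the same. One small correction to your ``hard step'': composing the two Silverstein relations actually yields $g(z)\,m_{\nu_p}(g(z)) = z\,m_{\mu_p}(z) + o_p(1)$, not $-(n_2/p)\,\psi_p(z)$ (check both sides as $z\to\infty$: the former tends to $-1$, the latter to $-n_2/p$); the two expressions differ by the constant $1-n_2/p$, whose contribution $\propto\oint_{g(\Gamma)} f(w)\,dw/w$ vanishes because $g(\Gamma)$ does not encircle the origin, so your substitution nonetheless produces exactly $\hat D(X_1,X_2;f)$.
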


The result of Theorem~\ref{th:main} is very similar to \cite[Theorem~1]{couillet2018random} established for functionals of the eigenvalues of $C_1^{-1}C_2$. The main difference lies in the expression of the function $\varphi_p(z)$.

\begin{proof}
The proof of Theorem~\ref{th:main} is based on the same approach as for \cite[Theorem~1]{COU18}. One first creates a link between the Stieltjes transform $m_{\nu_p}$ and $D(C_1,C_2;f)$ using Cauchy's integral formula:
\begin{align}
\label{eq:Cauchy}
    \frac1p\sum_{i=1}^pf(\lambda_i(C_1C_2))&=\int f(t)d\nu_p(t) \nonumber \\
    &=\frac{1}{2\pi i}\int \left[\oint_{\Gamma_\nu} \frac{f(z)}{z-t} dz\right] d\nu_p(t) \nonumber \\
    &= \frac{-1}{2\pi i}\oint_{\Gamma_\nu} f(z)m_{\nu_p}(z)dz
\end{align}
with $\Gamma_\nu$ a contour surrounding the support ${\rm supp}(\nu_p)$ of $\nu_p$. To relate the unknown $m_{\nu_p}$ to the observable $m_{\mu_p}$, we proceed as follows. By first conditioning on $\hat C_1$, $\hat C_1^{\frac12}\hat C_2\hat C_1^{\frac12}$ is seen as a sample covariance matrix for the samples $\hat C_1^{\frac12}C_2^{\frac12}\tilde x_i^{(2)}$, for which \cite{SIL95} allows one to relate $m_{\mu_p}$ to the Stieltjes transform of the eigenvalue distribution $\zeta_p$ of $C_2^{\frac12}\hat C_1C_2^{\frac12}$. The latter is yet another sample covariance matrix for the samples $C_2^{\frac12}C_1^{\frac12}\tilde x_i^{(1)}$; exploiting \cite{SIL95} again creates the connection from $m_{\zeta_p}$ to $m_{\nu_p}$. This entails the two equations:
\begin{align}
    \label{eq:bai1}
    m_{\mu_p}(z) &= \varphi_p(z) m_{\zeta_p}\left( \varphi_p(z)\right) + o_p(1) \\
    \label{eq:bai2}
    m_{\nu_p}\left( \frac{z}{\Psi_p(z)} \right) &= m_{\zeta_p}(z) \Psi_p(z)+o_p(1).
\end{align}
where $\Psi_p(z)\equiv 1-\frac p{n_2}-\frac p{n_2}zm_{\zeta_p}(z)$.
Successively plugging \eqref{eq:bai1}--\eqref{eq:bai2} into \eqref{eq:Cauchy} by means of two successive appropriate changes of variables, we obtain Theorem~\ref{th:main}.
\end{proof}

\bigskip

Theorem~\ref{th:main} takes the form of a complex integral which, for generic choices of $f$, needs be numerically evaluated. In the specific case of present interest where $f(z)=\sqrt{z}$, this complex integral can be evaluated as follows.

\begin{theorem}
\label{the:estimation}
    Let $\lambda_1\leq \ldots \leq \lambda_p$, with $\lambda_i\equiv \lambda_i(\hat{C}_1\hat{C}_2)$, and define $\{\xi_i\}_{i=1}^p$ and $\{\eta_i\}_{i=1}^p$ the (increasing) eigenvalues of $\Lambda-\frac{1}{n_1}\sqrt{\lambda}\sqrt{\lambda}^\trans$ and $\Lambda-\frac{1}{n_2}\sqrt{\lambda}\sqrt{\lambda}^\trans$, respectively, where $\lambda=\left(\lambda_1,\ldots,\lambda_p\right)^\trans$, $\Lambda={\rm diag}(\lambda)$ and $\sqrt{.}$ is understood entry wise. Then, under Assumption~\ref{ass:growth rates},
    \begin{align*}
        D(C_1,C_2;\sqrt{\cdot}) - \hat D(X_1,X_2;\sqrt{\cdot}) \asto 0
    \end{align*}
    where, if $n_1\neq n_2$, 
    \begin{align*}
    \hat D(X_1,X_2;\sqrt{\cdot}) &= { 2\sqrt{n_1n_2}} \frac1p\sum_{j=1}^{p}\sqrt{\lambda_j} \\
    &+\frac{2n_2}{\pi p}\!\sum_{j=1}^p \int_{\xi_j}^{\eta_j}\!\sqrt{-\frac{\varphi_p(x)}{\psi_p(x)}}\psi_p'(x) dx
    \end{align*}
    with $\varphi_p,\psi_p$ defined in Theorem~\ref{th:main} and, if $n_1=n_2$,
    \begin{equation*}
        \hat D(X_1,X_2;\sqrt{\cdot})=\frac{2n_1}{p}\sum_{j=1}^p\left(\sqrt{\lambda_j}-\sqrt{\xi_j}\right).
    \end{equation*}
\end{theorem}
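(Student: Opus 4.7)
\begin{proof-sketch}
The plan is to specialize the contour integral of Theorem~\ref{th:main} to $f(z)=\sqrt{z}$, analyze the singularities of the integrand, and collapse the contour onto the real axis. To identify the singularities, write $\psi_p(z)=1-\tfrac{1}{n_2}\sum_i\tfrac{\lambda_i}{\lambda_i-z}$ and the analogous expression for the denominator of $\varphi_p$. Applying the matrix determinant lemma to $\det(\Lambda-\mu I-\tfrac{1}{n_a}\sqrt{\lambda}\sqrt{\lambda}^\trans)$ shows that its eigenvalues are precisely the roots of $1-\tfrac{1}{n_a}\sum_i\lambda_i/(\lambda_i-\mu)=0$, so the $\eta_j$ coincide with the zeros of $\psi_p$ and the $\xi_j$ with the poles of $\varphi_p$. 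Moreover $\varphi_p$ has simple zeros at $0$ and at each $\lambda_j$ while $\psi_p$ has simple poles at the $\lambda_j$; consequently $\varphi_p\psi_p$ is analytic and equal to $n_1\lambda_j/n_2$ at every $\lambda_j$, and the branch points of $\sqrt{\varphi_p\psi_p}$ are confined to $0$, the $\eta_j$, and the $\xi_j$, with real branch cuts lying on the segments $[\min(\xi_j,\eta_j),\max(\xi_j,\eta_j)]$ (where $\varphi_p\psi_p<0$).

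The key manipulation is the identity
\[h(z)\equiv\sqrt{\varphi_p\psi_p}\!\left[\tfrac{\varphi_p'}{\varphi_p}-\tfrac{\psi_p'}{\psi_p}\right]=2\tfrac{d}{dz}\sqrt{\varphi_p\psi_p}-2\sqrt{\varphi_p\psi_p}\,\tfrac{\psi_p'}{\psi_p}.\]
Since $\Gamma$ encloses $p$ branch-point zeros ($\eta_j$) and $p$ branch-point poles ($\xi_j$) of $\varphi_p\psi_p$ (and avoids $0$), the total monodromy of $\sqrt{\varphi_p\psi_p}$ vanishes, so $\sqrt{\varphi_p\psi_p}$ is single-valued along $\Gamma$ and the exact-derivative term contributes zero, leaving $\oint_\Gamma h\,dz=-2\oint_\Gamma\sqrt{\varphi_p\psi_p}\,\psi_p'/\psi_p\,dz$. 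This step is essential: it tames the apparent $(z-\xi_j)^{-3/2}$ behavior of $h$ near each pole of $\varphi_p$ into an integrable $(z-\xi_j)^{-1/2}$ singularity, so the subsequent contour manipulations are well-defined.

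I would then contract $\Gamma$ onto tight loops around each branch cut, picking up residues at the poles $\lambda_j$ along the way. Each such residue equals $2\sqrt{n_1\lambda_j/n_2}$ (from $\sqrt{\varphi_p\psi_p}(\lambda_j)=\sqrt{n_1\lambda_j/n_2}$ combined with the residue $-1$ of $\psi_p'/\psi_p$ at $\lambda_j$), and after multiplication by the prefactor $n_2/(2\pi ip)$ this delivers the leading $\tfrac{2\sqrt{n_1n_2}}{p}\sum_j\sqrt{\lambda_j}$ term. On each branch cut, $\sqrt{\varphi_p\psi_p}$ takes opposite imaginary values $\pm i\sqrt{-\varphi_p\psi_p}$ on the two sides, so the top and bottom traversals add rather than cancel; applying $\sqrt{-\varphi_p\psi_p}/|\psi_p|=\sqrt{-\varphi_p/\psi_p}$ and tracking the orientation produces the claimed $\int_{\xi_j}^{\eta_j}\sqrt{-\varphi_p/\psi_p}\,\psi_p'\,dx$ contribution. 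The degenerate case $n_1=n_2$ (where $\xi_j=\eta_j$ and the cuts collapse) is treated separately by observing that $\varphi_p(z)\psi_p(z)=z$ identically, so $h(z)=z^{-1/2}-2\sqrt{z}\,\psi_p'/\psi_p$; the first piece integrates to zero (since $2\sqrt{z}$ is single-valued on $\Gamma$), and direct residues of $\sqrt{z}\,\psi_p'/\psi_p$ at the simple poles $\xi_j$ (residue $\sqrt{\xi_j}$) and $\lambda_j$ (residue $-\sqrt{\lambda_j}$) give the formula. The main obstacle I anticipate is the careful branch-tracking of $\sqrt{\varphi_p\psi_p}$ across the real axis---in particular handling the case distinction by the sign of $n_1-n_2$, which reverses the cut orientation---together with confirming that the small-arc contributions at the cut endpoints vanish in the tight-loop limit, both of which become tractable thanks to the integration-by-parts reduction.
\end{proof-sketch}
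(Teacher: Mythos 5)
Your proof follows the paper's own argument closely: you use the same characterization of $\xi_j,\eta_j$ as the poles of $\varphi_p$ and zeros of $\psi_p$ (via the matrix determinant lemma), the same integration-by-parts decomposition $\sqrt{\varphi_p\psi_p}\bigl[\tfrac{\varphi_p'}{\varphi_p}-\tfrac{\psi_p'}{\psi_p}\bigr]=2\tfrac{d}{dz}\sqrt{\varphi_p\psi_p}-2\sqrt{\varphi_p\psi_p}\,\tfrac{\psi_p'}{\psi_p}$, the same residue computation at the $\lambda_j$ (using $\varphi_p\psi_p(\lambda_j)=n_1\lambda_j/n_2$) for the leading term, and the same contraction onto the branch cuts $[\xi_j,\eta_j]$ for the integral term. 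Your handling of the exact-derivative piece via the zero-monodromy observation (equal numbers of branch-point zeros and poles inside $\Gamma$), and your direct residue calculation using $\varphi_p\psi_p\equiv z$ when $n_1=n_2$, are somewhat cleaner than the paper's explicit tracking of compensating $\epsilon$-divergent contributions and its $\xi_j\to\eta_j$ limiting argument, but the route is essentially identical.
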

While still assuming an integral form (when $n_1\neq n_2$), this formulation no longer requires the arbitrary choice of a contour $\Gamma$ and significantly reduces the computational time to estimate $D(C_1,C_2,\sqrt{\cdot})$. For $n_1=n_2$, a case of utmost practical interest, the expression is completely explicit and computationally only requires to evaluate the eigenvalues $\xi_j$ of $\Lambda-\frac{1}{n_1}\sqrt{\lambda}\sqrt{\lambda}^\trans$. The latter being a (negative definite) rank-$1$ perturbation of $\Lambda$, by Weyl's interlacing lemma \cite{franklin2012matrix}, the $\xi_j$'s are interlaced with the $\lambda_j$'s as
\begin{align*}
    \xi_1\leq \lambda_1 \leq \xi_2 \leq \ldots \leq \xi_p \leq \lambda_p.
\end{align*}
As the $\lambda_j$'s are of order $O(1)$ with respect to $p$, $|\lambda_j-\xi_j|\leq |\lambda_j-\lambda_{j-1}|=O(p^{-1})$, therefore explaining why the expression of $\hat D(X_1,X_2;\sqrt{\cdot})$ is of order $O(1)$.



\begin{proof}
The $\xi_i$ and $\eta_i$, as defined in the theorem statement, are the respective zeros of the rational functions $1-\frac p{n_1}-\frac p{n_1}zm_{\tilde{\mu}_p}(z)$ and $1-\frac p{n_2}-\frac p{n_2}zm_{\tilde{\mu}_p}(z)$ (see \cite[Appendix~B]{COU18}). Thus, $\varphi_p$ and $\psi_p$ can be expressed under the rational form:
\begin{equation*}
    \varphi_p(z)=z\frac{\prod_{i=1}^pz-\lambda_i}{\prod_{i=1}^pz-\eta_i}, \quad \psi_p(z)=\frac{\prod_{i=1}^pz-\xi_i}{\prod_{i=1}^pz-\lambda_i}.
\end{equation*}
\begin{figure}
    \centering
    \includegraphics[scale=0.3]{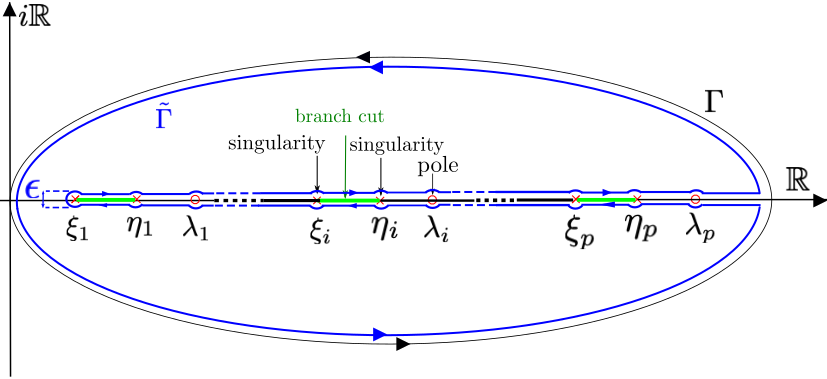}
    \caption{Deformation of the initial contour $\Gamma$ (in black) into the new contour $\tilde{\Gamma}$ (in blue). The branch cuts are represented in green (i.e., real $z$'s for which the argument of  $\varphi(z)\psi(z)$ is negative). 
    }
    \vspace{-0.5cm}
    \label{fig:new_contour}
\end{figure}
Evaluating the estimate from Theorem~\ref{th:main} for $f(z)=\sqrt{z}$ then requires to evaluate a complex integral involving rational functions and square roots of rational functions. Since the complex square root is multivalued, a careful control of ``branch-cuts'' is required. To perform this calculus, we deform the integration contour $\Gamma$ of Theorem~\ref{th:main} into $\tilde\Gamma$ as per Figure~\ref{fig:new_contour}. In the case $n_1\neq n_2$, the closed null-integral contour $\tilde\Gamma$ (blue in Figure~\ref{fig:new_contour}) is the sum of the sought-for integral over $\Gamma$ and of four extra components:
\begin{enumerate}[leftmargin=*]
    \item Integrals over $\epsilon$-radius circles around $\xi_i$: those are null in the limit $\epsilon\to 0$, as confirmed by a change of variable $z=\xi_i+\epsilon e^{\imath\theta}$ which allows one to bound the integrand;
    \item Integrals over the real axis (in the $\epsilon\to 0$ limit):
    \begin{align*}
    &A_2=\frac{n_2}{\pi p}\sum_{j=1}^p \int_{\xi_j+\epsilon}^{\eta_j-\epsilon}\sqrt{-(\varphi_p\psi_p)(x)}\left[2\frac{\psi_p^{'}(z)}{\psi_p(z)}\right.\\
    &-\left.\left(\frac{\varphi_p^{'}(z)}{\varphi_p(z)}+\frac{\psi_p^{'}(z)}{\psi_p(z)}\right)\right]dx\\
        &=\frac{2n_2}{\pi p}\sum_{j=1}^p \int_{\xi_j}^{\eta_j}\sqrt{-\varphi_p\psi_p(x)}\left[\frac{\psi_p^{'}(z)}{\psi_p(z)}\right]dx\\
        &-\frac{n_2}{\pi p}\sum_{j=1}^p \int_{\xi_j+\epsilon}^{\eta_j-\epsilon}\frac{\sqrt{-\varphi_p\psi_p(x)}}{\varphi_p\psi_p(x)}\left[\frac{d}{dx}\left(\varphi_p(x)\psi_p(x)\right)\right]dx\\
        &=\frac{2n_2}{\pi p}\!\sum_{j=1}^p \int_{\xi_j}^{\eta_j}\!\sqrt{-\frac{\varphi_p(x)}{\psi_p(x)}}\psi_p'(x) dx\\
        &-2\frac{n_2}{\pi p}\sum_{j=1}^p \frac{1}{\sqrt{\epsilon\frac{d}{dx}\left(\frac{1}{(\varphi_p\psi_p(x))}\right)(\eta_j)}}+o(\epsilon)
    \end{align*}
    
    \item Integrals over the $\epsilon$-radius circles around $\eta_j$, with $\epsilon\to 0$
    \begin{equation*}
        A_3=2\frac{n_2}{\pi p}\sum_{j=1}^p \frac{1}{\sqrt{\epsilon\frac{d}{dx}\left(\frac{1}{(\varphi_p\psi_p(x))}\right)(\eta_j)}}+o(\epsilon)
    \end{equation*}
    which thus compensates the last ($\epsilon$-diverging) term in $A_2$.
    \item Residues in the $\lambda_j$ poles
    \begin{align*}
        A_4&=2\frac{n_2}{p}\lim_{z\to\lambda_j}\sum_{j=1}^{p}\sqrt{(\varphi_p\psi_p)(z)}=2\frac{n_2}{p}\sqrt{\frac{n_1}{n_2}}\sum_{j=1}^p\sqrt{\lambda_j}.
    \end{align*}
\end{enumerate}
Putting these terms together entails the result of the theorem for the case where $n_1\neq n_2$.
For $n_1=n_2$, it suffices to take the limit of the expression as $\xi_j\to \eta_j$. This yields:
\begin{align*}
    \hat D(X_1,X_2;\sqrt{\cdot}) &= \frac{2n_1}p\sum_{j=1}^{p}\sqrt{\lambda_j} \\
    &+\frac{2n_1}{p}\!\sum_{j=1}^p \frac{1}{\pi}\lim_{t\rightarrow \xi_j}\int_{\xi_j}^{t}\!\sqrt{-\frac{\varphi_p(x)}{\psi_p(x)}}\psi_p'(x) dx\\
    &= \frac{2n_1}p\sum_{j=1}^{p}\sqrt{\lambda_j} \\
    &-\frac{2n_1}{p}\!\sum_{j=1}^p \frac{1}{2\pi\imath}\lim_{\epsilon\rightarrow 0}\oint_{\Gamma_{\xi_j}^{\epsilon}}\!\sqrt{-\varphi_p\psi_p(x)}\frac{\psi_p'(x)}{\psi_p(x)} dx
\end{align*}
where $\Gamma_{\xi_j}^{\epsilon}$ is an $\epsilon$-radius circular contour around $\xi_j$.
The second equality is obtained by deforming the real integral in the complex plane (see \cite{saff2003fundamentals} for complex analysis details). 
The result unfolds by letting ${z=\xi_i+\epsilon e^{\imath\theta}}$.


\end{proof}

\bigskip

Consequently, we obtain the following $n,p$-consistent estimate for the Wasserstein distance $D_W(C_1,C_2)$ of \eqref{eq:Wasserstein}.
\begin{corollary}[Consistent Estimate of $D_W(C_1,C_2)$]
\label{cor:estimate}
    Under Assumption~\ref{ass:growth rates},
    \begin{align}
        \frac1pD_W(C_1,C_2) - \left[\frac1p\tr (\hat C_1+\hat C_2) - 2\hat D(X_1,X_2;\sqrt{\cdot}) \right] \asto 0
    \end{align}
    for $\hat D(X_1,X_2;\sqrt{\cdot})$ given by Theorem~\ref{the:estimation}.
\end{corollary}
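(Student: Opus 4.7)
The plan is to obtain the corollary as an immediate consequence of Theorem~\ref{the:estimation} combined with the elementary convergence of the normalized trace of a sample covariance to that of its population counterpart. First I would rewrite the normalized Wasserstein distance from \eqref{eq:Wasserstein} as
\[
\frac1p D_W(C_1,C_2) = \frac1p\tr C_1 + \frac1p\tr C_2 - \frac{2}{p}\tr\bigl[(C_1^{1/2}C_2 C_1^{1/2})^{1/2}\bigr],
\]
and identify the last term with $2\,D(C_1,C_2;\sqrt{\cdot})$ via the observation that $C_1^{1/2}C_2 C_1^{1/2}$ is positive semi-definite under Assumption~\ref{ass:growth rates} and is similar to $C_1C_2$, so the two matrices share the same nonnegative real spectrum and $\sqrt{\lambda_i(C_1^{1/2}C_2 C_1^{1/2})}=\sqrt{\lambda_i(C_1C_2)}$ holds without ambiguity in the choice of square root, matching \eqref{eq:D}.

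Next I would invoke two already-available ingredients. On the one hand, the paper records (right after \eqref{eq:Wasserstein}) that $\frac1p\tr \hat C_a - \frac1p\tr C_a \asto 0$ for $a\in\{1,2\}$; this is a routine strong-law argument, using that $\frac1p \tr \hat C_a = \frac1{pn_a}\sum_{i=1}^{n_a}\tilde x_i^{(a)\trans} C_a \tilde x_i^{(a)}$ has expectation $\frac1p\tr C_a$ and a variance of order $O(1/(pn_a))$ under the bounded operator norm of $C_a$ and the finite fourth moment of the entries of $\tilde x_i^{(a)}$. On the other hand, Theorem~\ref{the:estimation} directly delivers
\[
D(C_1,C_2;\sqrt{\cdot}) - \hat D(X_1,X_2;\sqrt{\cdot}) \asto 0.
\]

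I would then combine these three almost-sure limits with the triangle inequality, weighted as $+1,+1,-2$, to conclude that the announced difference converges almost surely to zero. There is essentially no main obstacle here: all the technical machinery (Cauchy integration, Silverstein-type equations, branch-cut handling) has already been absorbed into Theorem~\ref{th:main} and Theorem~\ref{the:estimation}, and the only subtlety to verify is the spectral identification between $C_1^{1/2}C_2 C_1^{1/2}$ and $C_1C_2$ that makes the definition of $D(C_1,C_2;\sqrt{\cdot})$ coincide, up to the factor $2/p$, with the nontrivial term of the Wasserstein formula.
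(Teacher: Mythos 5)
Your proof is correct and matches the paper's (implicit) reasoning: the corollary is an immediate combination of the identity \eqref{eq:D}, the trace consistency $\frac1p\tr\hat C_a - \frac1p\tr C_a \asto 0$ recorded right after \eqref{eq:Wasserstein}, and Theorem~\ref{the:estimation}, exactly as you describe. The paper offers no separate proof since this assembly is considered routine, and your spectral identification of $C_1^{1/2}C_2C_1^{1/2}$ with $C_1C_2$ is the same observation already encoded in \eqref{eq:D}.
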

\begin{remark}[Estimation of $\|C_1-C_2\|_{F}^{2}$]
The Frobenius distance between two covariance matrices also falls under the scope of the present article for the function $f(z)=z$. Indeed,
\begin{align*}
    D_F(C_1,C_2)&=\|C_1-C_2\|_{F}^{2}=\tr\left(C_1^2+C_2^2\right)-2\tr\left(C_1C_2\right).
\end{align*}
Then under Assumption~\ref{ass:growth rates} and along with the fact that $\frac1p\tr C_1^2$ can be estimated consistently from $\frac1p\tr\hat C_1^2-\frac1{n_1p}(\tr \hat C_1)^2$,
\begin{align*}
        \frac1pD_F(C_1,C_2) - &\left[\frac1{p}\tr (\hat C_1^2+\hat C_2^2)-\frac{p}{n_1}\left(\frac1p\tr\hat{C}_1 \right)^2\right.\\
        &\left.-\frac{p}{n_2} \left(\frac1p\tr\hat{C}_2 \right)^2- 2\hat D(X_1,X_2;\cdot) \right] \asto 0.
    \end{align*}
In this case, $\hat D(X_1,X_2;\cdot)$ assumes the simple expression
\begin{align*}
    \hat D(X_1,X_2;\cdot)&=\frac 1p \sum_{j=1}^p \lambda_j = \frac1p\tr \hat C_1\hat C_2
\end{align*}
 which follows from $\frac1p\tr\hat C_1\hat C_2-\frac1p\tr C_1C_2\asto 0$ (by elementary probability arguments) or equivalently from a residue calculus based on Theorem~\ref{th:main} for $f(z)=z$.
\end{remark}

\section{Simulations and Applications}
\label{sec:applications}
In this section, we first corroborate our theoretical findings by comparing the classical plug-in estimator to our proposed estimator on synthetic Gaussian data. We then provide an application of our results to improved covariance matrix estimation based on few samples.

\subsection{Confirmation of our results on synthetic data}
We here compare the classical plug-in estimate of the Wasserstein distance (that is \eqref{eq:Wasserstein} with $C_a$ replaced by $\hat C_a$, $a=1,2$) with our proposed estimate in Corollary~\ref{cor:estimate}. Table~\ref{tab:1} lists the results obtained for Toeplitz matrices $C_1,C_2$ estimated based on various values of $p,n_1,n_2$. While our proposed estimator is designed under a large $p,n_1,n_2$ assumption (as per Assumption~\ref{ass:growth rates}), it achieves competitive performances even for small values of $p$, corroborating here our findings in \cite{couillet2018random} for other classes of covariance matrix distances.

\begin{table}[h!]
	\centering
	
	\begin{tabular}{r|rrr}
		$p$ & $D_{\rm W}(C_1,C_2)$ & Classical  & Proposed  \\
		\hline
		2 & 0.0110  &0.0127   &0.0120   \\
		4 & 0.0175 &0.0198   &0.0183   \\
		8 &0.0208  &0.0232  &0.0206   \\
		16 &0.0225  &0.0280  &0.0227   \\
		32 &0.0233  &0.0339  &0.0234  \\
		64 &0.0237  &{\bf 0.0451}  &0.0240  \\
		128& 0.0239  &{\bf \underline{0.0667}}  &0.0244  \\
		256&0.0240  &\bf \underline{\underline{0.1092}}  &0.0244  \\
		512& 0.0241  &\bf \underline{\underline{0.1953}}  & 0.0245
	\end{tabular}
	
	\medskip 
	{ (error $<$ 5\%)} 
	{\bf (error $>$ 50\%)}	
	{\bf \underline{(error $>$ 100\%)}}
	{\bf \underline{\underline{(error $>$ 300\%)}}}


	\caption{Estimators of the Wasserstein distance between $C_1$ and $C_2$ with $[C_1]_{ij}=.2^{|i-j|}$, $[C_2]_{ij}=.4^{|i-j|}$, $x_i^{(a)}\sim \mathcal N(0,C_a)$; $n_1=1024$ and $n_2=2048$ for different $p$. Averaged over $100$ trials.}	
	
\label{tab:1}
\label{tab:comparison}
\end{table}
\vspace{-.7cm}

\subsection{Application to covariance matrix estimation}

As a concrete application, Theorem~\ref{th:main} may be used to improve the actual estimation of covariance matrices under a small number $n\sim p$ of sample data, as similarly performed in \cite{tiomoko2019random} for other covariance matrix distances. 

The idea is as follows: we first particularize Theorem~\ref{th:main} and Theorem~\ref{the:estimation} to the case where one of the covariance matrices, say $C_1$, is known by taking $c_1=0$ (i.e., $n_1\to\infty$ for all fixed $p$). This gives access to estimates for $D_W(M,C_2;\sqrt{\cdot})$ for all deterministic positive definite matrix $M$. We then minimize this estimated distance over $M$ in order to estimate $C_2$ by means of a gradient descent approach.

For $C_1$ known, we redefine ${\mu_p=\frac{1}{p}\sum_{i=1}^p\delta_{\lambda_i(C_1\hat{C}_2)}}$ and obtain, as a corollary of Theorem~\ref{th:main}:
\begin{theorem}
\label{the:known_matrix}
Let  $\Gamma \subset \{z \in \mathbb{C},{\rm real}[z]>0\}$ a contour surrounding $\cup_{p=1}^{\infty} {\rm supp}(\mu_p)$. Then,
 \begin{align*}
    D(C_1,C_2;f)-\frac{1}{2\pi ic_2}\oint_{\Gamma}F\left(-m_{\tilde{\mu}_p}(z)\right)dz \asto 0
 \end{align*}
 with $m_{\tilde{\mu}_p}(z)=\frac{p}{n_2}m_{{\mu}_p}(z)+\frac{p-n_2}{n_2z}$ and $F'(z)=f(\frac 1z)$.
\end{theorem}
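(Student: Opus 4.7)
The plan is to view Theorem~\ref{the:known_matrix} as the $c_1\to 0$ specialization of Theorem~\ref{th:main} and then to reduce the resulting contour integral to the advertised form by a change of variable and an integration by parts on the closed contour $\Gamma$.

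First, I would specialize Theorem~\ref{th:main} to the regime where $n_1\to\infty$ with $p$ fixed (so that $c_1=0$). For fixed $p$, standard concentration gives $\hat C_1\to C_1$ almost surely in operator norm, so the eigenvalues $\lambda_i(\hat C_1\hat C_2)$ converge to $\lambda_i(C_1\hat C_2)$, and the Stieltjes transform $m_{\mu_p}$ of Theorem~\ref{th:main} coincides, in the limit, with the redefined $m_{\mu_p}$ of the statement. In the same limit,
\begin{align*}
\varphi_p(z)=\frac{z}{1-\tfrac{p}{n_1}-\tfrac{p}{n_1}zm_{\mu_p}(z)}\longrightarrow z,\qquad \frac{\varphi_p'(z)}{\varphi_p(z)}\longrightarrow \frac{1}{z},
\end{align*}
uniformly on any contour $\Gamma$ contained in $\{\mathrm{real}[z]>0\}$ and staying uniformly away from $\bigcup_p\mathrm{supp}(\mu_p)$. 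Plugging these limits into Theorem~\ref{th:main} gives
\begin{align*}
\hat D(X_1,X_2;f)=\frac{n_2}{2\pi i p}\oint_{\Gamma} f\!\left(\frac{z}{\psi_p(z)}\right)\!\left[\frac{\psi_p(z)}{z}-\psi_p'(z)\right]dz + o_p(1).
\end{align*}

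Next, I would exploit the identity
\begin{align*}
-m_{\tilde\mu_p}(z)=-\tfrac{p}{n_2}m_{\mu_p}(z)-\tfrac{p-n_2}{n_2 z}=\frac{1}{z}\Bigl(1-\tfrac{p}{n_2}-\tfrac{p}{n_2}z m_{\mu_p}(z)\Bigr)=\frac{\psi_p(z)}{z},
\end{align*}
and set $u(z)\equiv\psi_p(z)/z=-m_{\tilde\mu_p}(z)$. Then $\psi_p(z)=z u(z)$, $\psi_p'(z)=u(z)+zu'(z)$, and consequently $\psi_p(z)/z-\psi_p'(z)=-zu'(z)$. Since $F'(u)=f(1/u)$, the integrand rewrites as $f(1/u(z))\cdot(-zu'(z))=-z\,\tfrac{d}{dz}F(u(z))$.

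Finally, because $\Gamma$ is a closed contour on which $z\,F(u(z))$ is single-valued and analytic (it suffices that $\Gamma$ lie in the domain where $F$ is analytic, which is guaranteed by the complex-analytic extension assumption on $f$ together with $\mathrm{real}[z]>0$), integration by parts gives $\oint_\Gamma -z\,\tfrac{d}{dz}F(u(z))\,dz=\oint_\Gamma F(u(z))\,dz$. Combining with $n_2/p\to 1/c_2$ yields
\begin{align*}
\hat D(X_1,X_2;f)=\frac{1}{2\pi i c_2}\oint_{\Gamma}F(-m_{\tilde\mu_p}(z))\,dz + o_p(1),
\end{align*}
which is the claim. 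The main obstacle I expect is a technical one: justifying that the $o_p(1)$ from Theorem~\ref{th:main} is preserved when collapsing $\varphi_p(z)\to z$ under the integral, and ensuring that the chosen $\Gamma$ remains admissible (i.e., surrounds the support of the $\mu_p$ uniformly in $p$) after the redefinition of $\mu_p$; both follow from the uniform boundedness of $\|C_a^{\pm 1}\|$ in Assumption~\ref{ass:growth rates}, but need to be spelled out carefully.
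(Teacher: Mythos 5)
Your proof follows the same route as the paper: specialize Theorem~\ref{th:main} by setting $\varphi_p(z)=z$ (the $c_1\to 0$ limit with $\mu_p$ redefined through $C_1\hat C_2$), identify $-m_{\tilde\mu_p}(z)=\psi_p(z)/z$, and integrate by parts on the closed contour to replace the derivative by $F$ itself. The change of variable $u(z)=\psi_p(z)/z$ and the resulting simplification $\psi_p(z)/z-\psi_p'(z)=-zu'(z)$ is exactly what the paper does, written slightly more explicitly. One small note in your favour: your first display carries the bracket $\bigl[\tfrac{\psi_p(z)}{z}-\psi_p'(z)\bigr]$ correctly from Theorem~\ref{th:main}, whereas the paper's corresponding display has the opposite sign inside the bracket (an evident typo that it later compensates for); your sign bookkeeping is consistent throughout. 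Your closing remarks about justifying the exchange of the $n_1\to\infty$ limit with the large-$p$ asymptotics and about the admissibility of $\Gamma$ after the redefinition of $\mu_p$ are appropriate caveats that the paper glosses over but does not contradict.
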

\begin{proof}
For $C_1$ known ($c_1 \to 0$), $\varphi_p(z)=z$, and the estimator of Theorem~\ref{th:main} yields:
\begin{align*}
    \hat D(X_1,X_2;f) = \frac{1}{2\pi i}\oint_{\Gamma}\!f \left(\frac{z}{\psi_p(z)}\right)\left[\frac{\psi_p'(z)}{\psi_p(z)}-\frac{1}{z}\right]\frac{\psi_p(z)dz}{c_2}.
\end{align*}
Using the relation $m_{\tilde{\mu}_p}(z)=-\frac{\psi_p(z)}{z}$, we then get
\begin{align*}
    \hat D(X_1,X_2;f) = -\frac{1}{2\pi ic_2}\oint_{\Gamma}\!f \left(-\frac{1}{m_{\tilde{\mu}_p}(z)}\right)m_{\tilde{\mu}_p}'(z)zdz
\end{align*}
and the result is immediate after an integration by parts.
\end{proof}

\bigskip

For $f(z)=\sqrt{z}$, one has $F(z)=2\sqrt{z}$ and we obtain, with a similar proof as for Theorem~\ref{the:estimation},
\begin{align*}
    &D(C_1,C_2;\sqrt{\cdot})- \hat D(C_1,X_2;\sqrt{\cdot}) \asto 0, \\
    &\hat D(C_1,X_2;\sqrt{\cdot})=\frac{2}{\pi c_2}\sum_{j=1}^p \int_{\xi_j}^{\lambda_j}\sqrt{m_{\tilde{\mu}_p(x)}}dx.
\end{align*}

Our objective is now to exploit the fact that
\begin{equation}
\label{optim_initial}
    C_2=\argmin_{M\succ 0} D_W(M,C_2)
\end{equation}
where the minimization is over the open cone of positive definite matrices. Using the approximation $D(M,C_2;\sqrt{\cdot})\simeq \hat D(M,X_2;\sqrt{\cdot})$, we are then tempted to minimize $\frac1p\tr (M+\hat C_2)-2\hat D(M,X_2;\sqrt{\cdot})$ in place of $D_W(M,C_2)$. The former quantity however has a non zero probability to be negative, and we thus instead propose to estimate $C_2$ as:
\begin{align*}
    \check C_2&=\argmin_{M} h(M) \\
    \quad h(M)&=\left[\frac1p\tr (M+\hat C_2)-2\hat D(M,X_2;\sqrt{\cdot})\right]^2.
\end{align*}

To compute the gradient $\nabla h(M)$ of $h$ at position $M$, one needs to evaluate the differential ${\rm D}h(M)[\xi]$, at $M$ and in the direction $\xi$, in the Riemmanian manifold of $p\times p$ symmetric positive definite matrices (see  \cite{absil2009optimization,tiomoko2019random} to further technical details). We then use the relation ${\rm D}h(M)[\xi]=\langle \nabla h(M),\xi\rangle_{M}$ where $\langle\cdot,\cdot\rangle_{.}$ is the Riemmanian metric defined as $\langle\eta,\xi\rangle_{M}=\tr\left(M^{-1}\eta M^{-1} \xi \right)$. We obtain the relation
\begin{align*}
    &\pi\imath p\frac{\nabla h(M)}{2\sqrt{h(M)}}=\frac1pM^2\\
    &+\sum_{j=1}^p\int_{\xi_j}^{\lambda_j}\sqrt{\frac{1}{m_{\tilde{\mu}_p}(x)}}{\rm sym}\left(M\hat{C}_2(M\hat{C}_2-xI_p)^{-2}M\right) dx
\end{align*}
where ${\rm sym}(A)=\frac12 (A+A^\trans)$ is the symmetric part of $A \in \mathbb{R}^{p\times p}$.
We can write the latter as:
\begin{equation*}
    \nabla h(M)=2\sqrt{h(M)}\left[{\rm sym}\left(V\Lambda_{\nabla}V^{-1}\right)+\frac 1p M^2\right]
\end{equation*}
where $V$ is the orthogonal matrix of the eigenvectors of $M\hat{C}_2$ and $\Lambda_{\nabla}$ is the diagonal matrix with
\begin{align*}
    \left[\Lambda_{\nabla}\right]_{kk}&=\frac{1}{\pi p}\sum_{j\neq k}\int_{\xi_j}^{\lambda_j}\sqrt{\frac{1}{m_{\tilde{\mu}_p}(x)}}\frac{1}{(\lambda_k-x)^2}dx\\
    &+\frac{1}{\pi p}\sum_{j\neq k}\int_{\xi_k}^{\lambda_k}\sqrt{\frac{1}{m_{\tilde{\mu}_p}(x)}}\frac{1}{(\lambda_j-x)^2}dx.
\end{align*}
This finally entails the gradient descent Algorithm~\ref{alg:estimC}.


\begin{algorithm}
{\bf Require} Positive definite initialization $M=M_0$.

\smallskip

{\bf Repeat} {$M \gets M^{\frac12} \exp\left(-t M^{-\frac12} \nabla h(M) M^{-\frac12} \right) M^{\frac12}$ with $t$ either fixed or optimized by backtracking line search.}

{\bf Until} {Convergence.}

\smallskip

{{\bf Return} $M$.}
\caption{Proposed estimation algorithm.}
\label{alg:estimC}
\end{algorithm}
Figure~\ref{fig:covariance} depicts the results of the algorithm. There is displayed the Wasserstein distance $D_W(C,\cdot)$ between a matrix $C$ having four distinct eigenvalues of equal multiplicity (precisely, $\nu_p=\frac14(\delta_{.1}+\delta_3+\delta_4+\delta_5)$) and various estimators of $C$: the sample covariance matrix (SCM), the state-of-the-art ``non-linear shrinkage'' estimators QuEST1 \cite{LW15} (based on a Frobenius distance minimization) and QuEST2 \cite{LW18} (based on a Stein loss minimization), and the result of the gradient descent approach proposed in this section. For fair comparison, the iterative QuEST1, QuEST2 and our proposed method are all initialized at $M_0$ the linear shrinkage estimator from \cite{ledoit2004well}. Note that our proposed choice of $C$ is particularly suited to mimick an ``optimal transport'' problem of displacing the eigenvalues of $M_0$ to the discrete four positions of the eigenvalues of $C$.

In addition to the computational simplicity of our gradient-descent approach with respect to the QuEST estimators (see the numerical method details in \cite{ledoit2017numerical}), the figure demonstrates significant gains brought by our proposed approach for large values of $p/n$, where the SCM particularly fails.

\section{Concluding Remarks}
Interestingly, while the Fisher distance or Kullbach-Liebler divergence, which depend on logarithms of \emph{inverse} of covariance matrices, are understandably difficult to estimate in the $n_1,n_2<p$ regime (see \cite{COU18} for advanced discussions on this matter), the Wasserstein distance should not be confronted with this limitation. Yet, the invertibility of $C_1,C_2$ and the request for $c_1,c_2\in(0,1)$ (i.e., $p<n_1,n_2$) from Assumption~\ref{ass:growth rates} are fundamental to our proofs. Precisely, the variable changes exploited in the proof of Theorem~\ref{th:main} to reach a contour $\Gamma_\nu$ correctly surrounding ${\rm supp}(\nu_p)$ from a contour $\Gamma$ surrounding ${\rm supp}(\mu_p)$ are not satisfying if $c_1>1$ or $c_2>1$. These surprising difficulties need clarification.
 
Another point of interest lies in the comparative advantage of exploiting a particular covariance matrix distance in specific scenarios. For instance, it may seem that ill-conditioned matrices should be more tolerated by Wasserstein distance estimators than by Fisher distance estimators. Yet, this aspect is not obvious in our proofs and also deserves more insights.
 
\section*{Acknowledgement}
We thank Pedro Rodrigues for helpful discussions and references on optimal transport.

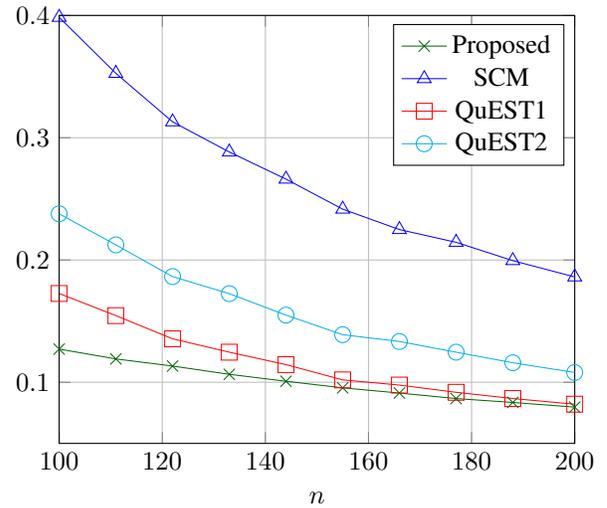
\begin{figure}[t]
    \centering
    \vspace{-0.2cm}
    \begin{tikzpicture}
        \begin{axis}[xlabel=$n$,grid=major,xmin=100,xmax=200,ymin=.05,ymax=.4]

        \addplot[ultra thin,mark size=3pt,mark=x,green!40!black]coordinates{
 (100,1.271253e-01)(111,1.190826e-01)(122,1.132466e-01)(133,1.064960e-01)(144,1.008248e-01)(155,9.542498e-02)(166,9.117903e-02)(177,8.666517e-02)(188,8.349900e-02)(200,7.968029e-02)
};
\addplot[ultra thin,mark size=3pt,mark=triangle,blue]coordinates{
    (100,3.984692e-01)(111,3.526451e-01)(122,3.130032e-01)(133,2.883282e-01)(144,2.661358e-01)(155,2.415792e-01)(166,2.248902e-01)(177,2.144673e-01)(188,1.994804e-01)(200,1.861496e-01)
};        

\addplot[ultra thin,mark size=3pt,mark=square,red]coordinates{
                    (100,1.725999e-01)(111,1.544890e-01)(122,1.355330e-01)(133,1.245952e-01)(144,1.144260e-01)(155,1.019346e-01)(166,9.769291e-02)(177,9.178613e-02)(188,8.670012e-02)(200,8.206614e-02)
};
\addplot[ultra thin,mark size=3pt,mark=o,cyan]coordinates{
                    (100,2.377855e-01)(111,2.123607e-01)(122,1.864354e-01)(133,1.724844e-01)(144,1.549034e-01)(155,1.389033e-01)(166,1.333437e-01)(177,1.245966e-01)(188,1.159382e-01)(200,1.079776e-01)
};
\legend{Proposed,SCM,QuEST1,QuEST2}
        \end{axis}
    \end{tikzpicture}
    \vspace{-0.25cm}
    \caption{Wasserstein distance $D_W(C,\cdot)$ between $C$ with $\nu_p=\frac14(\delta_{.1}+\delta_3+\delta_4+\delta_5)$ and (green) our proposed estimator, (blue) the sample covariance matrix, (red) and (light blue) the QuEST estimators proposed in \cite{LW18,LW15}; for $p=100$ and varying number of samples $n$ averaged over 10 realizations. 
    }
    \label{fig:covariance}
    \vspace{-0.5cm}
\end{figure}

\bibliographystyle{IEEEbib.bst}
\bibliography{reference.bib}
\end{document}